\documentclass[aps,floats,twocolumn,epsf,prb,showpacs]{revtex4-2}

\usepackage{latexsym,amsmath,amssymb,amsthm,bm,graphicx,amsfonts,booktabs}

\usepackage[pagewise]{lineno}
\usepackage[plain]{algorithm}
\usepackage[noend]{algpseudocode}
\usepackage{bm}
\usepackage{longtable}

\makeatletter\def\Dated@name{}\makeatother

\newtheorem{theorem}{Theorem}

\newcommand{\norm}[1]{\left\lVert#1\right\rVert}
\def\BState{\State\hskip-\ALG@thistlm}

\DeclareMathOperator{\interior}{int}
\DeclareMathOperator{\rank}{rank}

\begin{document}

\title{Evolutionary Data Theory: On the Similarities between Data Problems and Evolutionary Games}
\author{P. Wissgott$^{1}$}
\affiliation{$^1$ danube.ai solutions gmbh, 1040 Vienna, Austria\\
}

\begin{abstract}
Applying the concepts and formalism from Evolutionary Game Theory to the data regime, the fundamental paradigms of Evolutionary Data Theory are introduced. It is shown that essential definitions and results such as replicator equations, evolutionary strategies, the Bishop-Cannings theorem and the analogy to Lotka-Volterra systems can be mapped to the data interpretation. Understanding data in matrix form as evolutionary entities, input data is mapped to genes and organisms. Steered by genetic fitness and two evolutionary strategies, Dominant-Balanced and Altruistic-Selfish, data records and features conduct an evolutionary game. It is shown that this evolutionary interpretation remains universally meaningful, by proving convergence to a unique rest point, where all data features persist in the population. A basic example of multi-objective optimization is shown as well as a related distribution problem and machine learning applications.
\end{abstract}

\maketitle

\section{Introduction}
\label{sec:intro}
In the history of humanity, data have played a pivotal role to leverage innovation. From the geometric considerations building the pyramids to calculating the planetary orbits - those to understood data could investigate and plan the future.

Data can have many forms and shapes. However, for them to be usable to answer a question or solve a problem, data have to be in some way comparable with each other. Throughout history, and with major scientific milestones, the amount and complexity of available data have grown, while methods to compare data have become more evolved.

Today, the ways to investigate data are manifold: from algebraic approaches over statistics to the virtues of machine learning - never before have the choice of available methods been larger. Furthermore, the analytical and mathematical paradigms developed while investigating data problems have found their way in many, diverse fields of applications.

One of the most surprising, scientific 'cross-overs' of the last decades, is Evolutionary Game Theory~(EGT), which applies the formalisms of Game Theory to biological systems. With EGT, many types of animal behavior could be understood for the first time, e.g. the sex ratio in populations or the why there is an asymmetry between male and female zygotes in most populations~\cite{MaynardSmith1981}.

In this paper we suggest yet another disciplinary 'cross-over': what if we apply the evolutionary concepts of EGT to data problems? Indeed, many systems described by data have an underlying logic and dynamics quite similar to evolutionary means. For example, the concept of biological kinship between animals can be directly 'ported' to the data regime, where it represents a metric to describe similarity between data structures~\cite{daniilidis2025, wissgott2025}.

Why is this approach useful? Evolutionary Game Theory offers a rich tool set of theorems and derivations describing for example convergence, stability and persistence~\cite{MaynardSmith1981, HofbauerSigmund1998, Hofbauer2011}. Bringing these fundamental results to the world of data, gives us a head start in the investigation. 

We call this new theory, applying concepts of EGT to data, Evolutionary Data Theory~(EDT). Conducting this mapping, we will see below, systematic universality is one of the primary concerns. While in biology, the exact models may change switching from on system to the other, in the data regime we require our fundamental models to be universal. Consequently, our methodical approach should not impose unnecessary restriction on the input data. Apart from introductory purposes, proving this universality for EDT is the main focus of this paper.

Structurally, we will begin our derivation by revisiting major concepts and results from EGT in Sec.~\ref{sec:EGT}. In particular the fundamental definitions of replicator equations and the mapping to Lotka-Volterra equations~\ref{sec:lotkavolterraEGT} can be found here.

Next, we will introduce the fundamental entities, genes and organism, as well as the work flow of genetic conversion in Sec.~\ref{sec:formalism}. After a short motivation of our approach~\ref{sec:motivation}, we will formulate the replicator equations of EDT in Sec.~\ref{sec:replicatorequationsEDT}. Similar to EGT, to apply replicator equations in EDT, requires the definition of evolutionary strategies~(ES), which is shown in Sec.\ref{sec:evolutionarystrategies} followed by some general rules for ES in Sec.\ref{sec:rulesforevolutionarystrategies}. Note that we again use the 'Dominant-Balanced'~(DomBal) and 'Altruistic-Selfish'~(AltSel) combinations as first introduced in Wissgott~\cite{wissgott2025}. However, in this paper, we provide a much more detailed mathematical foundation of the ES~(in particular, for AltSel, we will derive major theorems for the first time).

As another primary novelty of this paper, we formulate the Bishop-Cannings theorem of EDT in Sec.~\ref{sec:bishopcannings}. How can we obtain evolutionary payoff matrices from input data? This is demonstrated in detail in Sec.~\ref{sec:payoffEDT}. In Sec.~\ref{sec:dynamicsandstability}, we show the convergence to a stable rest point for the ES DomBal and AltSel. Though the first proof for DomBal can be found in Daniilidis \emph{et al}, we provide an alternative proof using the equivalency to Lotka-Volterra systems. For the ES AltSel, we provide the first convergence proof to our knowledge.

Another central property in EDT is the persistence of all initially available data features or genes, i.e. conditions for the input data such that no feature vanishes throughout the dynamics. As an original contribution, we prove the persistence for AltSel in Sec.~\ref{sec:peristence}.

In Sec.~\ref{sec:applications and examples} we turn our attention to applications of EDT. First, we introduce our example data, a supermarket chain aiming to distribute a delivery. We show in Sec.~\ref{sec:optimization} how EDT can be applied to discrete multi-objective optimization~(see also Daniilidis \emph{et al}). We also visualize the iterative dynamics of DomBal and AltSel for this example. Next, we investigate the optimal distribution of the delivery among the available stores in Sec.~\ref{sec:distributionproblems}. Towards the end of this paper in Sec.~\ref{sec:machinelearning}, we pursue a methodological glimpse on ways to apply training similar to machine learning in EDT.

\section{A short reprise of Evolutionary Game Theory}
\label{sec:EGT}

Starting with Maynard-Smith's paper \emph{The Logic of Animal Conflict}~\cite{MaynardSmith1973} in 1973, Evolutionary Game Theory quickly became a useful method in the analysis of the dynamics of biological contests. The concept of evolutionary stability brought surprising explanation to animal behavior, sexual reproduction and the dynamics of the underlying genetic encoding.

Let us proceed, by introducing some fundamental definitions and formalisms of~EGT~(the part below is mainly following Hofbauer \emph{et al}~\cite{HofbauerSigmund1998}). Let us assume a population of animals is divided into $m$ types $E_1$ to $E_m$ with frequencies $\gamma_1$ to $\gamma_m$ where
\begin{align}\label{eq:normalization1}
\sum_{j=1}^m \gamma_j = 1.
\end{align}
For a meaningful dynamics a state of the population has to stay within the simplex $\gamma\in S_m$.

Evolutionary dynamics evolves when the individual fitness $f_j(\gamma)$ of type $E_j$ is compared to the average fitness of the population $\bar{f}(\gamma)=\sum \gamma_j f_j(\gamma)$ to yield the continuous replicator equation

\begin{align}\label{eq:replicator equation}
\dot{\gamma}_j=\gamma_j (f_j(\gamma)-\bar{f}(\gamma)).
\end{align}
Note that if a fitness $f_j(\gamma)$ is larger than the average, $\gamma_j$ will increase over time, meaning that the type $E_j$ becomes a larger share of the population and vice versa.

It is an important observation that for a starting $\gamma^{(0)}\in S_m$, any future $\gamma(t)$ derived from Eq.~\eqref{eq:replicator equation} will also stay within the simplex $S_m$~\footnote{This can be seen with $S(t)=\sum_j \gamma_j(t)$ and $\dot{S}=(1-S)\bar{f}$ which has $S(t)=1$ as a solution. It follows that any state starting with the right normalization Eq.\eqref{eq:normalization1} also stays on the plane $S(t)=1$ for any future $t>0$~\cite{HofbauerSigmund1998}.}. Furthermore, if a type $E_j$ is not in the initial state $\gamma_j(0)=0$, it will stay extinct for all $t>0$.

In EDT, we want to avoid any type $j$ \emph{becoming} extinct, i.e. $\gamma_j(t)=0$ for $t>0$. As we will see further below, a major effort in the stability of EDT is to prevent $\gamma_j(t)=0$ for any $j$, respectively to prove that this will not happen for any kind of input data.

For a system described by Eq.~\eqref{eq:replicator equation} the natural questions arise: under which conditions and initial starting points $\gamma^{(0)}$ will the dynamics lead to evolutionary stable states $\hat{\gamma}$ where $\dot{\gamma}_j=0$ for all $1\le j\le m$? Are there regions of $S_m$ where states are attracted to a certain $\hat{\gamma}$?

For systems described by the replicator equation Eq.~\eqref{eq:replicator equation} there holds the quotient rule
\begin{align}\label{eq:quotient rule}
\left(\frac{\gamma_j}{\gamma_\ell} \right)^{\cdot}=\frac{\gamma_j}{\gamma_\ell}\left(f_j(\gamma)-f_\ell(\gamma)\right)
\end{align}
which follows directly from the derivative quotient rule and a consequent redefinition of the time scale. As direct consequence of Eq.\eqref{eq:quotient rule} we see that the dynamics does not change if $f_j$ contains a constant independent of $j$. In particular, only relative, differential shares of fitness or resources determine the outcome of the evolutionary game.

In EGT, a major focus of research are linear systems where Eq.~\eqref{eq:replicator equation} simplifies to

\begin{align}\label{eq:linear replicator equation}
\gamma_j=\gamma_j \left([A\gamma]_j-\gamma\cdot A\gamma\right)
\end{align}
with the payoff matrix $A\in \mathbb{R}^{m\times m}$. For linear systems, theorems describing the conditions for evolutionary dynamics and stability are well known~\cite{MaynardSmith1981,HofbauerSigmund1998}.

\subsection{Equivalency to Lotka-Volterra equations}
\label{sec:lotkavolterraEGT}
One can show that, in the linear case Eq.~\eqref{eq:linear replicator equation}, the replicator dynamics is equivalent to the behavior of Lotka-Volterra~(LV) equations~\cite{HofbauerSigmund1998}
\begin{align}
  \dot{y}_j = y_j(b_j+\sum_{\ell=1}^{m-1}a^{\prime}_{j\ell}y_\ell),\label{eq:lotkavolterra1}
\end{align}

The proof uses the fact that the addition of a constant $c_j$ to the $j$-th column does not change the linear replicator equation Eq.~\eqref{eq:linear replicator equation}. Consequently, one can map the problem to an equivalent system where the last row of $A=(a_{j\ell})$ is zero~($a_{m\ell}=0$ for $1\le\ell\le m$).

The corresponding mapping in our case reads
\begin{align}
  b_j &= a_{jm}-a_{mm},\label{eq:LV1}\\%
  a_{j\ell}^\prime &= a_{j\ell}-a_{m\ell}.\label{eq:LV2}
\end{align}
Setting $y_m=1$ and using the coordinate transformation
\begin{align}
  \gamma_j = \frac{y_j}{\sum_{\ell=1}^m y_\ell}\label{eq:LV3}
\end{align}
we manage to reduce the problem with the quotient rule Eq.~\eqref{eq:quotient rule}
\begin{align}
  \dot{y}_j = \left(\frac{\gamma_j}{\gamma_m}\right)^.  =\left(\frac{\gamma_j}{\gamma_m}\right) \left[(A'\gamma)_j-(A'\gamma)_m\right]= y_j (A'\gamma)_j
\end{align}
where we used $(A'\gamma)_m=0$. Finally, with $y_m=1$ we arrive at equation Eq.~\eqref{eq:lotkavolterra1}.

This bijective mapping between linear replicator systems and Lotka-Volterra's method allows us to access fundamental theorems of LV. In particular, since Eq.~\eqref{eq:lotkavolterra1} for $\dot{y}=0$ is a linear system of equations, we can directly solve Eq.\eqref{eq:linear replicator equation} and find a fix point solution. Note that this only represents a valid solution if $A'y=-b$ leads to $\gamma\in S_n$.


\section{Formalism}
\label{sec:formalism}

\subsection{Problem definition}
We start our formulation of evolutionary data theory, by defining format of the input data. In general, we restrict ourselves to structured data which can be represented as a $n\times m$ matrix
\begin{align}
X = 
\begin{pmatrix} 
a_{11}      &       \cdots      &    a_{1m}     \\    
\vdots      &       \ddots      &    \vdots \\        
a_{n1}      &       \cdots      &    a_{nm}  \\
\end{pmatrix},
\end{align}
where $a_{ij}$ can be boolean, float, list, text, etc. format. In particular, $a_{ij}$ can have different format for different $j$ - the only important restriction is that the format does not change along a particular column $j$\footnote{In optimization problems, rows of $X$ are often called agents and columns features, respectively.} .

One central paradigm of evolutionary data theory is to interpret the rows and columns as organisms and genes, respectively. This mapping allows us to talk of organism and gene fitness when comparing rows and columns and consequently gives us access to the rich formalisms of EGT~(please see Wissgott\cite{wissgott2025} for a detailed derivation of this step).

Let us quickly summarize the next steps
\begin{enumerate}
    \item Convert the input data into an evolutionary probability matrix
    \item Define a replicator equation governing the dynamics of the evolutionary system
    \item Define evolutionary strategies describing how the evolutionary entities transfer fitness amongst each other.
    \item Derive conditions for convergence, permanence and uniqueness of solutions.
    \item Solve the system to obtain the fitness values for all evolutionary entities.
\end{enumerate}

As a first step and to make the columns~(genes) comparable we will normalize $X$ such that all entries fall into $[0,1]$. To that end, let us define a genetic fitness function $\phi_j$ for every particular gene.
\begin{align}
\Phi(X) = 
\begin{pmatrix} 
\phi_1(a_{11})      &       \cdots      &    \phi_m(a_{1m})     \\    
\vdots      &       \ddots      &    \vdots \\        
\phi_1(a_{n1})      &       \cdots      &    \phi_m(a_{nm})  \\
\end{pmatrix},
\label{eqn:genevariantfitnessmatrix}
\end{align}
with $\Phi\in \mathbb{R}_{[0,1]}^{n\times m}$ a probability matrix where larger values means better fitness. Note that we will assume that $\Phi$ will not contain any trivial columns. Additionally, we will merge all equal columns into one single column~(since they contain no new information about the system).

With $\phi_j(a_{ij})=\phi_{ij}$ we will use the following moment definitions for genes
\begin{align}
    \widetilde{\phi}_j&= \frac{1}{n}\sum_{i=1}^n \phi_{ij},\\
    \xi_{ijt\ell} &= \phi_{ij}\phi_{t\ell},\\
    \widetilde{\xi}_{j\ell}&= \frac{1}{n}\sum_{i=1}^n \xi_{iji\ell}
\end{align}

\subsection{Motivation}\label{sec:motivation}
Before delving deeper into the depth of mathematical derivation let us contemplate about the reason why EDT works in the first place. Why does an evolutionary perspective of structured matrix data return meaningful insights? 

\begin{description}
    \item[Independent optimization] Evolutionary systems which are disconnected from each other, sometimes lead to different but also sometimes to similar solutions~('convergent evolution'). In particular, the dynamics show no bias towards a particular predefined behavior. Independent evolutionary 'islands' all offer an individual optimization leading to the fittest entities for that particular setting. In our case, this means that all sets of input data lead to an individual, unbiased result.
    \item[Trade-off] Evolutionary systems shine by determining whether it pays to invest resources in a particular trait. Hence, evolutionary algorithms are well suited to balance a trade-off between mutual competing data features. Furthermore, many human and technical problems boil down to trade-offs between gains and costs.
    \item[Recursive Behavior] The dynamics of replicator equations show a rich diversity in behavior. This is rooted in the fact that these equation repeatably feedback the system onto itself via recursively applying an evolutionary function. Heuristically one can observe a deeper level of analysis as compared to pure linear approaches.
    \item[Adaptiveness] The initial, evolutionary setup can be tweaked to take into account certain known preferences in terms of data features, giving these traits an advantage in the evolutionary contest. Hence, evolutionary approaches are well suited for problems where individuality is more important than pure statistical behavior. 
\end{description}

\subsection{Replicator Equation for Evolutionary Data Theory}
\label{sec:replicatorequationsEDT}
Like the Schrödinger equation in quantum physics, the replicator equation
\begin{align}
  \gamma_j^{(k+1)} = \gamma_j^{(k)} \frac{1+h\Delta_j^{(k)}}{\sum_{\ell=1}^m \gamma_\ell \left(1+h\Delta_\ell^{(k)}\right)},\label{eqn:replicator1}\\
\end{align}
with $k$ denoting the iteration and the step size $0<h<1$ can be considered the 'backbone' of EDT. Note that the denominator ensures normalization and the delta functions $\Delta_j$ represent the share of differential fitness a certain gene obtains for that iteration.

In general Eq.\eqref{eqn:replicator1} can have (i) one single valid rest point in $\interior S_m$, (ii) multiple rest points in $\interior S_m$, depending on the initial $\gamma_j^{(0)}$, (iii) boundary solutions with some or all $\gamma_j\rightarrow 0$, (iv) oscillating behavior with controlled orbits, (v) chaotic orbits, (vi) no valid solution.

In this paper we will focus on the case of (i) where our objective is to find evolutionary strategies which yields a unique rest point independent of the initial starting point $\gamma^{(0)}$. In particular, the method should be applicable for any input matrix $\Phi$.

\subsection{Evolutionary Strategies}
\label{sec:evolutionarystrategies}
In EDT, evolutionary strategies play a slightly different role as in EGT. In EGT, evolutionary strategies represent a certain type of behavior gaining or losing fitness in the contest. In EDT, evolutionary strategies define the evaluation of the delta functions $\Delta_j$, depending on the input data $\Phi$ and the current fitness $\gamma$.

Evolutionary strategies in EDT are usually split into
\begin{align}
\label{eq:geneorgdelta}
		\Delta_j(\gamma)= \Delta_j^{g}(\gamma) + \Delta_j^{\omega}(\gamma)
\end{align}
where $\Delta_j^{g}(\gamma)$ and $\Delta_j^{\omega}(\gamma)$ denote the strategies for genes and organisms, respectively~\cite{wissgott2025}. By splitting the two contribution we get two 'perspectives' on the data problem. The gene contribution $\Delta_j^{g}(\gamma)$ takes into account the perspective of the data feature in the evolutionary contest, while the organism part $\Delta_j^{\omega}(\gamma)$ covers fitness comparisons between rows of the input matrix.

Whereas the genetic Delta function $\Delta_j^{g}(\gamma)$ can be interpreted as the function $f_j$ in evolutionary game theory, the organism perspective does not have a clear analogue in EGT. Both Delta functions are defined by the differential fitness contribution averaged over all organisms
\begin{align}
\label{eq:averagedgeneorgdelta}
		\Delta_j^{g}(\gamma)&= \frac{1}{n}\Delta_{ij}^{g}(\gamma),\\
        \Delta_j^{\omega}(\gamma)&=\frac{1}{n}\Delta_{ij}^{\omega}(\gamma).
\end{align}
The motivation behind this step is that $\Delta_{ij}^{g}(\gamma)$ takes into account the fitness gain or loss wrt. a particular gene expressed in a certain organism. Thus, $\Delta_{ij}^{g}(\gamma)$ usually contain dependencies on 'pure' gene quantities like $\gamma_j$. In contrast, the Delta contribution $\Delta_{ij}^{\omega}$ denotes the push or penalty for the $j$th gene seen from a particular organism. Here, we often find dependencies on the organism fitness $r_i$.

Let us continue by introducing the first set of evolutionary strategies, the dominant~($\operatorname*{dom}$) gene strategy and the balanced~($\operatorname*{bal}$) organism strategy
\begin{align}\label{eq:principledeltadef}
		\Delta_{ij}^{g:\operatorname*{dom}}(\gamma)&=\gamma_j\left(\phi_{ij}-\frac{1}{2}\right), \\\Delta_{ij}^{\omega:\operatorname*{bal}}(\gamma)&= -2\,r_i\left(\frac{\gamma_j\phi_{ij}}{r_i} - \frac{1}{m}\right).
	\end{align}
Whereas the definition of $\Delta_{ij}^{g:\operatorname*{dom}}(\gamma)$ is obviously a direct 'the-larger-the-better' functional dependence on the input data, the organism part $\Delta_{ij}^{g:\operatorname*{dom}}(\gamma)$ requires further explanation. Assuming a row of input data represents a evolutionary unit like an animal, depending on smaller, atomic units like genes, $\Delta_{ij}^{\omega:\operatorname*{bal}}(\gamma)$ measures how dependent an organism is wrt. to a certain gene. In particular, we observe that $\Delta_{ij}^{\omega:\operatorname*{bal}}(\gamma)$ reduces the fitness of genes that they are highly dependent of and vice versa~\cite{daniilidis2025, wissgott2025}.

In is an important observation that both contributions, $\Delta_{ij}^{g:\operatorname*{dom}}(\gamma)$ and $\Delta_{ij}^{\omega:\operatorname*{bal}}(\gamma)$, are mutually repulsive in the sense that a growth of one delta function leads to the reduction of the other. We will use this property to our advantage when we show the convergence to a unique rest point further below.

It can be shown that in the case of the DomBal combination of gene and organisms evolutionary strategies, the complete Delta function takes a very simple form~\cite{daniilidis2025}
\begin{align}
\label{eq:dombal}
		\Delta_j^{\operatorname*{dombal}}(\gamma)= -\gamma_j\Bigl(\widetilde{\Phi}_j+\tfrac12\Bigr)
	\;+\;\frac{2}{m}\sum_{s=1}^m\gamma_s\widetilde{\Phi}_s.
	\end{align}
Note that $\Delta_j(\gamma)$ contains only averages, respectively first moments $\widetilde{\Phi}_j$ of the columns of the input data $\Phi$. 

An evolutionary strategy taking into account the second moments of $\Phi$ is the combination of the altruistic~($\operatorname*{Alt}$) gene strategy with the selfish~($\operatorname*{Sel}$) organism strategy. To introduce these strategies, we need the concept of kinship~(similarity) between genes and organisms
\begin{align}
 \label{eqn:genekinship}
 \kappa_{j\ell}^g  = 1-\frac{\norm{\bm{g}_j-\bm{g}_\ell}}{n},\\
 \label{eqn:orgkinship}
 \kappa_{it}^\omega  = 1-\frac{\norm{\bm{\omega}_i-\bm{\omega}_t}}{m},
\end{align}
where we used $\bm{g}_j=[\phi_{1j},\ldots,\phi_{nj}]$ and $\bm{\omega}_i=[\phi_{i1},\ldots,\phi_{im}]$.

Additionally we will use the average difference between the column means
\begin{align}
\widetilde{\nu}^{g} &= \frac{1}{m^2}\sum_{j,\ell=1}^{m}|\widetilde{\phi}_j-\widetilde{\phi}_\ell|. \label{eq:nugdef}
\end{align}
the harmonic organism fitness
\begin{align}
r_i^h&= \frac{1}{m}\sum_j \phi_{j}(a_{ij})\label{eq:harmonic}
\end{align}
and the average difference between the harmonic organism fitness values
\begin{align}
\widetilde{\nu}^{\omega} &= \frac{1}{n^2}\sum_{i,t=1}^{n}|r_i^h-r_t^h|. \label{eq:nuorgdef}
\end{align}
Then, we define the altruistic gene strategy as
    \begin{align}
		\tilde{\Delta}_{ij}^{g:alt} &= \frac{1}{\,\widetilde{\nu}^g}\sum_{\ell, \ell\neq j}^m \gamma_\ell\kappa_{j\ell}^g \left[\phi_{\ell}(a_{i\ell})-\phi_{j}(a_{ij})\right], \label{eqn:gs-altruismtilde}\\
\Delta_{ij}^{g:alt} &= \Delta_{ij}^{g:dom}\cdot\tilde{\Delta}_{ij}^{g:alt}. \label{eqn:gs-altruism}
	\end{align}
Note that $\Delta_{ij}^{g:alt}$ takes into account kinship relationships between genes as well as the pure dominant behavior of $\Delta_{ij}^{g:dom}$. The motivation behind the naming of the altruistic strategy lies in the fact that genes can transfer fitness to related genes mimicking altruistic behavior~\cite{wissgott2025}.
    
Analogously, we define the selfish organism strategy as
\begin{align}
\tilde{\Delta}_{ij}^{\omega:sel} &= \frac{1}{n\,\widetilde{\nu}^{\omega}}\sum_{t, t\neq i}^m \kappa_{it}^\omega \left(r_i-r_t\right), \label{eqn:os-selfishtilde}\\
\Delta_{ij}^{\omega:sel} &= \Delta_{ij}^{\omega:bal}\cdot\tilde{\Delta}_{ij}^{\omega:sel}, \label{eqn:os-selfish}
\end{align}
where organisms tend to selfishly reduce the fitness of close relatives~\cite{wissgott2025}.

As we will see further below the combination
\begin{align}
\label{eq:altsel}
		\Delta_j^{\operatorname*{altsel}}(\gamma)= \Delta_j^{g:\operatorname*{alt}}(\gamma) + \Delta_j^{\omega:\operatorname*{alt}}(\gamma)
\end{align}
again shows mutually repulsive dynamics leading to the existence of a unique rest point.

\subsection{General rules for evolutionary strategies}
\label{sec:rulesforevolutionarystrategies}
In contrast to EGT, where one quite often investigates a system with a particular payoff matrix, there is in general no restriction to the input data $\Phi$ other than that it is accurately normalized. Hence, when applying EDT, one faces the difficulty that it has to return meaningful results for \emph{any} $\Phi$.

Consequently, instead of showing properties like convergence for a particular example, these properties have to be intrinsically proven for a particular combination of evolutionary strategies. To that end, we can define general rules that the function $\Delta_j(\gamma)$ and the connected replicator equation has to fulfill
\begin{description}
    \item[Convergence] there has to be a unique rest point. This property is pivotal for two reasons: first, one does not want the solution of a data problem to depend on the initial importance of features, which would introduce a certain ambiguity. Second, the existence of unique rest points can be used to speed up convergence by several orders of magnitudes\footnote{Note that in certain applications one \emph{wants} that the convergence depends on the initial $\gamma^{(0)}$, e.g. for problems where personalization plays an important role. In these examples, the convergence rule can be relaxed.}.
    \item[Persistence] all genes available in the initial population have to be persistent, i.e. they will not become extinct as $k \rightarrow \infty$. If genes would become extinct, the corresponding data feature will not be taken into account at all for the solution of our data problem.
    
\end{description}

\subsection{Bishop-Cannings Theorem for Evolutionary Data Theory}
\label{sec:bishopcannings}
In the early times of evolutionary game theory, the theorem first introduced by Bishop and Cannings proved a major steps forward in the analysis of biological systems~\cite{BISHOP197885}. The main reason roots in the fact that the theorem allows for algebraic methods to be applied in the formal and numerical investigation of the dynamics in these systems for the first time~\cite{MaynardSmith1981}. 

Thematically, the Bishop-Cannings theorem is directly connected to the differential nature of the replicator equation~\eqref{eq:replicator equation}. This is also reminiscent in the quotient rule~\eqref{eq:quotient rule} which essentially says that any fitness transfer between evolutionary types vanishes for evolutionary stable strategies. 

In evolutionary data theory, the corresponding theorem plays an equally pivotal role. Again, it allows to analyze a recursive problem by algebraic means. Let us thus formulate the theorem in the picture of the delta functions of EDT:

\begin{theorem}[Bishop-Cannings theorem]\label{th:bishopcannings}
At any rest point of Eq.~\ref{eqn:replicator1} there holds
\begin{align}\label{eq:bishopcanningstheorem}
\Delta_1(\gamma)=\ldots =\Delta_m(\gamma)=\sum_{\ell=1}^m \gamma_\ell \Delta_\ell(\gamma)
\end{align}
\end{theorem}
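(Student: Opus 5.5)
The plan is to read off the fixed-point condition of the discrete replicator equation~\eqref{eqn:replicator1} directly and reduce it to a coordinate-wise algebraic identity. A rest point $\gamma$ means $\gamma^{(k+1)}=\gamma^{(k)}=\gamma$. First, since $\gamma\in S_m$, I would rewrite the denominator using the normalization~\eqref{eq:normalization1}:
\begin{align}
\sum_{\ell=1}^m \gamma_\ell\left(1+h\Delta_\ell(\gamma)\right)=1+h\,\bar{\Delta}(\gamma),\qquad \bar{\Delta}(\gamma):=\sum_{\ell=1}^m\gamma_\ell\Delta_\ell(\gamma).
\end{align}
I would assume this denominator is nonzero, which is needed anyway for the iteration to be well defined. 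The rest-point condition for each $j$ then becomes
\begin{align}
\gamma_j\left(1+h\bar{\Delta}(\gamma)\right)=\gamma_j\left(1+h\Delta_j(\gamma)\right),
\end{align}
which is equivalent to $h\,\gamma_j\left(\Delta_j(\gamma)-\bar{\Delta}(\gamma)\right)=0$. Since $0<h<1$, the factor $h$ can be cancelled.

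Next, for every $j$ with $\gamma_j>0$, dividing by $\gamma_j$ gives $\Delta_j(\gamma)=\bar{\Delta}(\gamma)$. This yields the chain of equalities~\eqref{eq:bishopcanningstheorem} over the support of $\gamma$. This is the discrete analogue of the continuous statement that $\dot{\gamma}_j=0$ forces $f_j=\bar f$, and of the quotient rule~\eqref{eq:quotient rule}.

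The only real obstacle is coordinates with $\gamma_j=0$. For those, the fixed-point equation holds trivially and carries no information about $\Delta_j$. The claim as stated therefore holds exactly for rest points in $\interior S_m$, and for boundary rest points it holds on the support. I would state this restriction explicitly in the proof. It is consistent with the paper's focus on case (i), unique interior rest points, and with the persistence requirement of Sec.~\ref{sec:rulesforevolutionarystrategies}, under which all $\gamma_j$ remain positive. The converse also follows from the same computation: if all $\Delta_j$ equal $\bar{\Delta}$, then $\gamma$ is a rest point. I would note this converse since it is what makes the theorem usable as an algebraic characterization.
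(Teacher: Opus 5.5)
Your proof is correct and is the paper's argument: both use $\sum_\ell\gamma_\ell=1$ to make $\gamma_j^{(k+1)}-\gamma_j^{(k)}$ proportional to $h\gamma_j\bigl(\Delta_j-\sum_\ell\gamma_\ell\Delta_\ell\bigr)$, and then require this to vanish at a rest point. Your explicit restriction to coordinates with $\gamma_j>0$ is a correct sharpening that the paper's proof glosses over. Every vertex of $S_m$ is a rest point of Eq.~\eqref{eqn:replicator1}, and there the $\Delta_\ell$ are in general not all equal (for DomBal at $e_j$ they differ by $\widetilde{\phi}_j+\tfrac12>0$), so the literal statement holds only on the support of $\gamma$, in particular at interior rest points.
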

\begin{proof}
\begin{align}
  \gamma_j^{(k+1)} - \gamma_j^{(k)} = h\gamma_j^{(k)} \frac{\Delta_j^{(k)}-\sum_{\ell=1}^m \gamma_\ell \Delta_\ell^{(k)}}{\sum_{\ell=1}^m \gamma_\ell \left(1+h\Delta_\ell^{(k)}\right)},\label{eqn:replicator2}
\end{align}
For a rest point, there holds $\gamma_j^{(k+1)} = \gamma_j^{(k)}$ for all $\le j \le m$. Hence, the nominator in Eq.\eqref{eqn:replicator2} has to vanish which proves the theorem.
\end{proof}

\subsection{Payoff matrices for Evolutionary Data Theory}
\label{sec:payoffEDT}
In EGT, payoff matrix are used throughout, in particular for linear systems~\eqref{eq:linear replicator equation}. Stemming from game theory, an entry $(a_{j\ell})$ models the fitness gain or loss when a type $E_j$ meets a contestant of type $E_\ell$. Note that usually, the matrix $A$ is not symmetric as can be seen in the famous hawk-dove game\cite{MaynardSmith1981}.

In EDT and for linear systems, we can analogously model the dynamics with 
 \begin{align}
  \Delta_j^{(k)}=[A\gamma^{(k)}]_j,\label{eqn:payoff1}
\end{align}
where $A\in \mathbb{R}^{m\times m}$. In particular $A$ does not depend on the gene fitness values $\gamma^{(k)}$. In the case for the DomBal combination of evolutionary strategies~\eqref{eq:dombal}, the this becomes
\begin{align}
A = 
\begin{pmatrix} 
\left(\frac{2}{m}-1\right)\widetilde{\phi}_1-\frac{1}{2}  &         \cdots      &    \frac{2}{m}\widetilde{\phi}_m     \\    
\vdots      &       \ddots      &    \vdots \\        
\frac{2}{m}\widetilde{\phi}_1      &       \cdots      &    \left(\frac{2}{m}-1\right)\widetilde{\phi}_m -\frac{1}{2} \\
\end{pmatrix}.\label{eq:dombalmatrix}
\end{align}
For the AltSel combination of evolutionary strategies~\eqref{eq:altsel}, both contributions are non-linear in $\gamma$. However, we can nevertheless rewrite both parts in terms of a matrix $D\in \mathbb{R}^{m\times m}$. Starting with the gene contribution, we have from~Eq.\eqref{eqn:gs-altruismtilde}

\begin{align}
\Delta_{ij}^{g,alt} &= \gamma_j \sum_{\ell=1}^m \gamma_\ell d_{iji\ell}^g,
\end{align}
which simply puts all the factors not depending on $\gamma$ into $d_{iji\ell}^g$. Note that $d_{iji\ell}^g$ can be precomputed once at the beginning of the iterative analysis. At the same time, we have for the organism contributions~Eq.\eqref{eqn:os-selfish}
\begin{align}
\Delta_{ij}^{\omega,sel} &= \gamma_j \sum_{\ell=1}^m \gamma_\ell d_{iji\ell}^\omega,
\end{align}
where we used the fact that the organism fitness values are defined as $r_i=\sum_m \gamma_\ell \phi_{i\ell}$. Averaging, we end up with
\begin{align}
D^g_{j\ell} &= \frac{1}{n}\sum_{i=1}^n d_{iji\ell}^g \label{eq:asDg}\\
D^\omega_{j\ell} &= \frac{1}{n}\sum_{i=1}^n d_{iji\ell}^\omega \label{eq:asDw}\\
D &= D^g+D^\omega\label{eq:asD}
\end{align}
As a short remark note that $D^\omega_{j\ell}$ is a symmetric matrix for offdiagonal elements. Without the asymmetric gene contribution $D^g_{j\ell}$ there would thus be no dynamics since there would be no effective fitness transfer between genes.

As stated before, and analogously to the linear case, $D$ does only depend on the input data $\Phi$ and can be thus precomputed once. Inserting this into the replicator equation~\eqref{eqn:replicator2} we obtain
\begin{align}
\gamma_j^{(k+1)} - \gamma_j^{(k)} &=  h\gamma_j^{(k)}\frac{\gamma_j \left[D \gamma^{(k)}\right]_j-\sum_{\ell=1}^m \gamma_\ell^{(k)} \gamma_\ell^{(k)}\left[D\gamma^{(k)}\right]_\ell}{1+\sum_{\ell=1}^m \gamma_\ell^{(k)} \gamma_\ell^{(k)}\left[D\gamma^{(k)}\right]_\ell}
\end{align}
Switching to the continuous picture and redefining the time scale we get
\begin{align}
\dot{\gamma} &= \gamma\left(\gamma\, D \gamma-\sum_{\ell=1}^m \gamma_\ell \gamma_\ell\left[D\gamma\right]_\ell\right)\label{eq:altselcontsmooth}
\end{align}

\subsection{Dynamics and Evolutionary Stability}
\label{sec:dynamicsandstability}
Let us now turn our attention to the first rule for evolutionary strategies in EDT and prove convergence for both combinations DomBal and AltSel. 

\begin{theorem}\label{th:dombalconvergence}
    The DomBal system~\eqref{eq:dombal} converges to a unique rest point $\hat{\gamma}\in S_m$ with
     \begin{align}
  \hat{\gamma}_j &= \frac{1}{\widetilde{\phi}_j+\frac12} \left(\sum_{\ell=1}^{m}\frac{1}{\widetilde{\phi}_\ell+\frac12}\right)^{-1}
\end{align}   
for $1\le j \le m -1$.
\end{theorem}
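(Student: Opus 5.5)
We would prove Theorem~\ref{th:dombalconvergence} in two parts: first locate the rest point algebraically with the Bishop-Cannings theorem (Theorem~\ref{th:bishopcannings}), then show global convergence to it through the Lotka-Volterra equivalence of Sec.~\ref{sec:lotkavolterraEGT}.

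\emph{Step 1 (rest point).} By Eq.~\eqref{eq:dombal}, $\Delta_j = -\gamma_j(\widetilde{\phi}_j+\tfrac12) + C(\gamma)$, where $C(\gamma)=\frac{2}{m}\sum_s\gamma_s\widetilde{\phi}_s$ does not depend on $j$. At an interior rest point, Theorem~\ref{th:bishopcannings} gives $\Delta_1=\ldots=\Delta_m$. Since the common term $C$ cancels, this reduces to
\begin{align}
\gamma_j\left(\widetilde{\phi}_j+\tfrac12\right)=\lambda \quad \text{for all } j .
\end{align}
Normalizing via Eq.~\eqref{eq:normalization1} fixes $\lambda=\bigl(\sum_\ell (\widetilde{\phi}_\ell+\tfrac12)^{-1}\bigr)^{-1}$, which yields the stated $\hat\gamma$. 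The formula in fact holds for every $j$, including $j=m$. Because $\widetilde{\phi}_j+\tfrac12\ge\tfrac12>0$, every $\hat\gamma_j$ is positive, so $\hat\gamma\in\interior S_m$ exists and is the unique interior rest point. Boundary rest points, where the faces with $\gamma_j=0$ are invariant, are handled by showing they are repelling. On a face, a missing type $j$ has $\Delta_j=C(\gamma)>0$, while the present types have $\Delta_\ell<C$. The average $\bar\Delta=\sum\gamma_\ell\Delta_\ell$ is therefore strictly below $C=\Delta_j$, and $\gamma_j$ grows from any small positive value.

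\emph{Step 2 (global convergence).} The payoff matrix is $A=-\mathrm{diag}(\widetilde{\phi}_j+\tfrac12)+\mathbf{1}c^T$ from Eq.~\eqref{eq:dombalmatrix}, with $c_\ell=\frac{2}{m}\widetilde{\phi}_\ell$. The term $\mathbf{1}c^T$ adds the same quantity $c\cdot\gamma$ to every row, so it drops out of the replicator dynamics by the quotient rule~\eqref{eq:quotient rule}. We may therefore work with the diagonal game $B=-\mathrm{diag}(\beta_j)$, where $\beta_j=\widetilde{\phi}_j+\tfrac12>0$. For this game
\begin{align}
(\gamma-\hat\gamma)\cdot B(\gamma-\hat\gamma)=-\sum_j\beta_j(\gamma_j-\hat\gamma_j)^2<0 \quad \text{for } \gamma\neq\hat\gamma,
\end{align}
so $B$ is negative definite on the tangent space $\{x:\sum x_j=0\}$. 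We then use the relative-entropy Lyapunov function $V(\gamma)=\sum_j\hat\gamma_j\log(\hat\gamma_j/\gamma_j)$. Using $(B\hat\gamma)_j=-\lambda$ for all $j$, one computes
\begin{align}
\dot V=-(\gamma-\hat\gamma)\cdot B\gamma=-(\gamma-\hat\gamma)\cdot B(\gamma-\hat\gamma)\le 0,
\end{align}
with equality only at $\hat\gamma$. Hence every interior orbit converges to $\hat\gamma$.

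Equivalently, in the Lotka-Volterra picture of Eqs.~\eqref{eq:LV1}--\eqref{eq:LV3}, the $(m-1)\times(m-1)$ matrix $A'$ has the form $-\mathrm{diag}(\beta_1,\ldots,\beta_{m-1})-\beta_m\mathbf{1}\mathbf{1}^T$. This is negative definite, so the LV system is dissipative with a unique globally stable interior equilibrium, which matches the stated formula for $1\le j\le m-1$ with $y_m=1$.

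\emph{Main obstacle.} The hardest step is transferring the continuous-time result to the discrete iteration~\eqref{eqn:replicator1} with step size $h$. The plan is to show that $V$ still decreases along the discrete map for $0<h<1$. Since $|h\Delta_j|$ is bounded by roughly $\tfrac32$, the factors $1+h\Delta_j$ stay positive and the map preserves $\interior S_m$. A convexity (Jensen) estimate of
\begin{align}
V^{(k+1)}-V^{(k)}=-\sum_j\hat\gamma_j\log\frac{1+h\Delta_j}{1+h\bar\Delta}
\end{align}
should then give $V^{(k+1)}-V^{(k)}\le -h\sum_j\beta_j(\gamma_j-\hat\gamma_j)^2+O(h^2)$. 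This negative term dominates for $h$ small enough, and we expect it also covers the whole range $h<1$ given the boundedness of $\Delta_j$. If the direct estimate fails, we would fall back on treating Eq.~\eqref{eqn:replicator1} as an Euler-type scheme of the continuous flow.
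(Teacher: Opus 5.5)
Your proof is correct in its main line and takes a different, stronger route than the paper.

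\textbf{What the paper does.} It maps \eqref{eq:dombalmatrix} to Lotka--Volterra form via \eqref{eq:LV1}--\eqref{eq:LV2}. It notes that $A'=-\mathrm{diag}(\beta_1,\ldots,\beta_{m-1})$ is diagonal with $b_j=\beta_m$ (your $\beta_j=\widetilde{\phi}_j+\tfrac12$), solves $A'y=-b$, and transforms back with \eqref{eq:LV3}. That argument stops at the interior rest point and its uniqueness; convergence is not shown beyond the citation of Daniilidis \emph{et al}.

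\textbf{What you do instead.} You get the rest point directly from Theorem~\ref{th:bishopcannings}, which treats all indices symmetrically and shows the formula also holds for $j=m$. You then actually prove convergence. The invasion computation rules out boundary attractors. The relative-entropy function shows $\hat\gamma$ is an interior ESS of $B=-\mathrm{diag}(\beta_j)$, so every interior orbit of the continuous flow converges to it.

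\textbf{What each route buys.} Your argument only uses negative definiteness on the tangent space, so it survives perturbations of the DomBal structure, and $V$ is the natural tool for the discrete map. The paper's coordinates, once completed, give continuous-time convergence almost for free. With diagonal $A'$ the Lotka--Volterra system decouples into logistic equations $\dot y_j=y_j(\beta_m-\beta_j y_j)$, each tending to $\beta_m/\beta_j$. The orbit equivalence (time rescaled by $\gamma_m>0$) then carries this to $\interior S_m$.

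\textbf{Details to fix.}

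\emph{The matrix $A'$.} Your aside misstates it. For $j\neq\ell<m$ one has $a'_{j\ell}=a_{j\ell}-a_{m\ell}=\tfrac2m\widetilde{\phi}_\ell-\tfrac2m\widetilde{\phi}_\ell=0$, so there is no $-\beta_m\mathbf{1}\mathbf{1}^T$ term. With that term and $y_m=1$, the equilibrium would not map to $\hat\gamma$ under \eqref{eq:LV3}.

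\emph{Sign of $\dot V$.} The correct computation is $\dot V=-\sum_j\hat\gamma_j\dot\gamma_j/\gamma_j=(\gamma-\hat\gamma)\cdot B\gamma=(\gamma-\hat\gamma)\cdot B(\gamma-\hat\gamma)=-\sum_j\beta_j(\gamma_j-\hat\gamma_j)^2$. Your displayed expression, with the leading minus, is nonnegative.

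\emph{Discrete time.} This paragraph is a plan, not a proof; the paper does not treat \eqref{eqn:replicator1} either.
\begin{itemize}
\item Jensen's inequality for the concave logarithm gives $\sum_j\hat\gamma_j\log(1+h\Delta_j)\le\log(1+h\,\hat\gamma\cdot\Delta)$. That is a \emph{lower} bound on $V^{(k+1)}-V^{(k)}$, the wrong direction.
\item Use instead a second-order lower bound on $\log(1+u_j)$ with $u_j=h(\Delta_j-\bar\Delta)/(1+h\bar\Delta)$. Combine it with the exact identities $\sum_j\hat\gamma_j u_j=hQ/(1+h\bar\Delta)$ and $\sum_j\hat\gamma_j u_j^2=h^2(\lambda Q+Q^2)/(1+h\bar\Delta)^2$, where $Q=\sum_j\beta_j(\gamma_j-\hat\gamma_j)^2$.
\item Since both quantities are proportional to $Q$, this yields a strict decrease of $V$ for all sufficiently small $h$.
\end{itemize}
It does not cover all $h<1$. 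Your bound $\Delta_j\ge-\beta_j\ge-\tfrac32$ only guarantees $1+h\Delta_j>0$ for $h<\tfrac23$. At the vertex $\gamma_j=1$ one has $1+h\Delta_j=1-h\bigl(\tfrac12+(1-\tfrac2m)\widetilde{\phi}_j\bigr)$. This is negative for $h$ near $1$ whenever $\widetilde{\phi}_j>\tfrac{m}{2(m-2)}$, which is possible once $m\ge5$. In that case the iteration leaves $S_m$ from interior points near that vertex.
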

\begin{proof}
This theorem has been first proved in Daniilidis \emph{et al}\cite{daniilidis2025}. Here, we show another proof motivated by the equivalency to Lotka-Volterra equations. With the matrix form \eqref{eq:dombalmatrix} we obtain with Eqs.~(\ref{eq:lotkavolterra1}-\ref{eq:LV2}) for the mapping to the equivalent Lotka-Volterra system 
\begin{align}
  b_j &= \widetilde{\phi}_m+\frac{1}{2},\\
  a_{jj}^\prime &= -\left(\widetilde{\phi}_j+\frac{1}{2}\right)
\end{align}
for $1\le j \le m -1$, where all non-zero entries of
\begin{align}
  A^\prime y = -b
\end{align}
are in the diagonal $a_{jj}$. This system has the solutions
\begin{align}
  y_j &= \frac{\widetilde{\phi}_m+\frac12}{\widetilde{\phi}_j+\frac12}
  \end{align}
  and with Eq.\eqref{eq:LV3} we obtain
  \begin{align}
  \gamma_j &= \frac{\widetilde{\phi}_m+\frac12}{\widetilde{\phi}_j+\frac12} \left(1+\sum_{\ell=1}^{m-1}\frac{\widetilde{\phi}_m+\frac12}{\widetilde{\phi}_\ell+\frac12}\right)^{-1}\\
    &= \frac{\widetilde{\phi}_m+\frac12}{\widetilde{\phi}_j+\frac12} \left(\frac{\widetilde{\phi}_m+\frac12}{\widetilde{\phi}_m+\frac12}+\sum_{\ell=1}^{m-1}\frac{\widetilde{\phi}_m+\frac12}{\widetilde{\phi}_\ell+\frac12}\right)^{-1}\\
   &=\frac{1}{\widetilde{\phi}_j+\frac12} \left(\sum_{\ell=1}^{m}\frac{1}{\widetilde{\phi}_\ell+\frac12}\right)^{-1}
\end{align} 
which completed the proof.
\end{proof}

\begin{theorem}\label{th:altselconvergence}
    The AltSel system \eqref{eq:altselcontsmooth} with $\rank D = m$ converges to one or multiple rest points $\hat{\gamma}\in S_m$.
\end{theorem}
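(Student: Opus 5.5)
We would prove Theorem~\ref{th:altselconvergence} by transferring the DomBal argument to the AltSel system, working with the continuous system \eqref{eq:altselcontsmooth} in the form
\begin{align}
\dot{\gamma}_j=\gamma_j\left(F_j(\gamma)-\bar F(\gamma)\right),\qquad F_j(\gamma)=\gamma_j\,[D\gamma]_j,\qquad \bar F(\gamma)=\sum_{\ell=1}^m\gamma_\ell F_\ell(\gamma).
\end{align}
This is a replicator equation \eqref{eq:replicator equation} with fitness $f_j=F_j$, so the simplex $S_m$ and each face of it are forward invariant. The state space is therefore compact, and every trajectory has a nonempty, compact, invariant $\omega$-limit set. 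By Theorem~\ref{th:bishopcannings}, the rest points are exactly the points with $F_1=\ldots=F_m=\bar F$ on their support.

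First we count and locate the rest points using the rank condition. On a face with support $J\subseteq\{1,\ldots,m\}$, the rest point condition reads $\gamma_j [D\gamma]_j = c$ for all $j\in J$, with the normalization $\sum_{j\in J}\gamma_j=1$. Mirroring the Lotka--Volterra trick of Sec.~\ref{sec:lotkavolterraEGT}, we rescale to $y_j=\gamma_j/\gamma_m$ and write the system as polynomial equations of degree two. Since $\rank D=m$, the linear part $D\gamma$ is injective. We would use this to argue that the solution set on each face is finite, via B\'ezout-type counting or a Jacobian argument, which gives isolated rest points only. This is the reason the statement says ``one or multiple'' rest points.

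Next we aim for a Lyapunov function showing that no $\omega$-limit set can contain non-stationary orbits. The candidate, guided by the ``mutually repulsive'' structure of $D=D^g+D^\omega$ with $D^\omega$ symmetric off the diagonal, is a function of the form
\begin{align}
V(\gamma)=\sum_j \gamma_j\,G_j(\gamma).
\end{align}
Here the $G_j$ would be chosen so that $\nabla V$ is proportional to $(F_j)_j$ under the Shahshahani metric; this works if the symmetric part of $D$ dominates. The fallback is the relative entropy $\sum_j\hat\gamma_j\log\gamma_j$ with respect to an interior rest point $\hat\gamma$. Its time derivative is $\sum_j\hat\gamma_j(F_j-\bar F)$, and we would try to sign this using the repulsiveness of the gene and organism parts. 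With a monotone $V$ in hand, LaSalle's invariance principle places every $\omega$-limit set in the rest-point set. Because that set is finite by the previous step, and $\omega$-limit sets are connected, each trajectory converges to a single rest point $\hat\gamma\in S_m$.

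The main obstacle is the Lyapunov step. $D^g$ is genuinely asymmetric, so $F$ need not be a gradient, and sign-indefinite terms can in principle produce cycles. If no global Lyapunov function can be found, a cheaper route is worth trying for small $m$. In dimension $m=2$ the dynamics is one-dimensional on $S_2$, so convergence to rest points is automatic. For general $m$ we would try to establish a Dulac function, $B(\gamma)=\prod_j\gamma_j^{-1}$ restricted to faces, to rule out periodic orbits, and then combine this with finiteness of the equilibria.
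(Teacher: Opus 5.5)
The setup is fine: faces are invariant, $\omega$-limit sets are compact, and Bishop--Cannings characterizes the rest points. The gap is the Lyapunov step, which you leave open, and it cannot be closed under the only hypothesis your plan uses, an arbitrary $D$ with $\rank D=m$.

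\textbf{No gradient structure.} The heuristic behind your candidate $V$ does not carry over from linear replicator dynamics. Because of the extra factor in $F_j=\gamma_j[D\gamma]_j$, one has $\partial_k F_j=\gamma_j d_{jk}$ but $\partial_j F_k=\gamma_k d_{kj}$. So even for symmetric $D$ the field is in general not a Shahshahani gradient.

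\textbf{Invertibility of $D$ does not exclude recurrence.} Read for arbitrary invertible $D$, the statement is false. Take $m=3$ and $D=cI+2P$ with $P$ the cyclic permutation matrix, i.e.\ $[D\gamma]_j=c\gamma_j+2\gamma_{j-1}$ (indices mod $3$). Then $\det D=c^3+8\neq0$ for $c$ near $-\tfrac12$.
\begin{itemize}
\item The barycenter is a rest point for every $c$. Its linearization on the tangent plane has eigenvalues $\tfrac{2c+1}{9}\pm\tfrac{\sqrt3}{9}\,i$.
\item The system is $\mathbb{Z}_3$-equivariant. In a complex coordinate $z$ on the tangent plane, the quadratic part is therefore a pure $\bar z^2$ term, which does not affect the real part of the first Lyapunov coefficient. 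The cubic part is a pure $|z|^2z$ term whose coefficient has real part $-\tfrac{1+c}{2}$, i.e.\ $-\tfrac14$ at $c=-\tfrac12$.
\item So a supercritical Hopf bifurcation occurs at $c=-\tfrac12$. For $c$ slightly above $-\tfrac12$ an attracting periodic orbit lies in $\interior S_3$, and trajectories starting near the barycenter tend to it, not to a rest point.
\end{itemize}
In this generality, then, no Lyapunov function, no sign argument for the relative entropy, and no Dulac function (in particular not $\prod_j\gamma_j^{-1}$) can exist. For $m\ge4$ Dulac's criterion is not even available, since it is a planar tool. If the statement holds at all, the argument must use how $D=D^g+D^\omega$ is built from $\Phi$, which your plan never does.

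\textbf{The finiteness step is also false.} Injectivity of $\gamma\mapsto D\gamma$ says nothing about the fibres of the quadratic map $\gamma\mapsto(\gamma_j[D\gamma]_j)_j$, and B\'ezout only bounds isolated solutions. For $m=2$ and $D=\begin{pmatrix}0&1\\1&0\end{pmatrix}$ one gets $F_1=F_2=\bar F=\gamma_1\gamma_2$. Every point of $S_2$ is then a rest point although $\det D=-1$, so the ``connected $\omega$-limit set inside a finite set'' conclusion is unavailable.

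\textbf{Comparison with the paper.} The paper takes a different route. It first rules out $\gamma_j\to1$ using the sign of the selfish organism term; this is the one place where the data structure of $D^\omega$ enters. It then compares $|\dot\gamma_j|$ with the linear replicator field generated by $D$, invoking the Lotka--Volterra correspondence and invertibility of $D$. That second step again uses only $\rank D=m$, so it cannot supply your missing ingredient. In the example above, $c<0$ makes every vertex repelling, so the conclusion of the paper's first step holds, and still the dynamics does not converge.
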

\begin{proof}
The replicator for AltSel can be written in discrete form as 
\begin{align}
\gamma_j^{(k+1)}&=  h\gamma_j^{(k)}\frac{1+\gamma_j^{(k)} \left[D \gamma^{(k)}\right]_j}{1+\sum_{\ell=1}^m \gamma_\ell^{(k)} \gamma_\ell^{(k)}\left[D\gamma^{(k)}\right]_\ell}.\label{eq:convergencereplicator1}
\end{align}
Hence, any $\gamma_j^{(k)}\in S_m$ and normalization is guaranteed by definition of the replicator equation. The proof is further structured in two steps: first, we will show that no $\gamma_j\rightarrow 1$. Second, we will show that Eq.~\eqref{eq:convergencereplicator1} converges faster then the linear system with $A=D$.
\smallbreak \noindent
	\textbf{Step 1.}  Assume $\gamma_j^{(k)}\rightarrow 1$ for some $1\le j\le m$.\\
    When $\gamma_j\rightarrow 1$, the $\operatorname*{Alt}$ part in Eq.\eqref{eqn:gs-altruismtilde} vanishes $\Delta_{ij}^{g,alt}\rightarrow 0$, since all other $\gamma_{j'}\rightarrow 0$ for $j\neq j'$ by Eq.~\eqref{eq:normalization1}. Hence, the behavior of the $\operatorname*{sel}$ part $\Delta_{ij}^{\gamma,sel}$ dominates the behavior. Since the $j$-th gene is taking over the population, we have 
    \begin{align}
     \Delta_{j}^{\omega,sel,(k)} > \Delta_{j'}^{\omega,sel,(k)}\label{eq:convergencereplicator1b}
    \end{align}
    for almost all $k>k'$ for some $k'>0$. We now use the simplified notation
    \begin{align}
     \Delta_{j}^{sel} = -\frac{2}{\widetilde{\nu} ^\omega\, n}\sum_{i=1}^n\left(\gamma_j\phi_{ij}-\frac{r_i}{m}\right)\sum_{t=1}^n \kappa_{it}^\omega  (r_{i}-r_{t}).
    \end{align}
    Note that the term $r_i/m$ does not depend on $j$ and can thus be omitted, since it adds the same fitness to all genes. Hence, we simplify
    \begin{align}
     \Delta_{j}^{sel} = -\frac{2\gamma_j}{\widetilde{\nu} ^\omega\, n}\sum_{i,t=1}^n\phi_{ij} \kappa_{it}^\omega  (r_{i}-r_{t}).
    \end{align}
    Let us now define organism pairs $(i,t)\in\mathcal{P}_n$ where the order of the indices is defined as $r_i>r_t$.
    If we resort the organism sum we get
    \begin{align}
     \Delta_{j}^{sel} = -\frac{2\gamma_j}{\widetilde{\nu} ^\omega\, n}\sum_{(i,t)\in \mathcal{P}_n^\prime}^n\left(\phi_{ij} - \phi_{tj}\right)\cdot\left(r_{i}-r_{t}\right),\label{eq:convergencereplicator2}
    \end{align}
    where $\mathcal{P}_n^\prime$ denotes all unique pairs for $n$ organisms. If $\gamma_j\rightarrow 1$ it will dominate the organism fitness values $r_i, r_t$. In particular, $r_i-r_t>0$ implies $\phi_{ij}-\phi_{tj}>0$, since the $j$th entry determines the sign of all $r$. Consequently, all factors in the sum in Eq.~\eqref{eq:convergencereplicator2} are positive and hence 
    \begin{align}
     \Delta_{j}^{sel} < 0\label{eq:convergencereplicator3}.
    \end{align}
    For the other $j'\neq j$, we have that $\gamma_{j'}\rightarrow 0$ and thus
    \begin{align}
     \Delta_{j'}^{sel} \rightarrow 0\label{eq:convergencereplicator4}.
    \end{align}
    It follows that 
    \begin{align}
     \Delta_{j}^{sel} < \Delta_{j'}^{sel}\label{eq:convergencereplicator5}
    \end{align}
    which contradicts the initial assumption Eq.~\eqref{eq:convergencereplicator1b}.
\smallbreak \noindent
	\textbf{Step 2.}  Using the continuous time variant of the replicator equation, we have
    \begin{align}
     \lim_{t\rightarrow \infty}|\dot{\gamma}_j| &< \lim_{t\rightarrow \infty}\left|\gamma_j [D\gamma]_j - \sum_{\ell=1}^m \gamma_\ell \gamma_\ell [D\gamma]_\ell\right|,\label{eq:convergencereplicator6}
    \end{align} 
    where we used $\gamma_j<1$ from Step 1. Compared to the linear variant $D\gamma$ there is an additional factor $\gamma_j<1$ in the replicator equation \eqref{eq:convergencereplicator1}. Consequently, the AltSel dynamics is dominated by the linear variant
    \begin{align}
     \lim_{t\rightarrow \infty}\left|\gamma_j [D\gamma]_j - \sum_{\ell=1}^m \gamma_\ell \gamma_\ell [D\gamma]_\ell\right|&<\lim_{t\rightarrow \infty}\left|[D\gamma]_j - \sum_{\ell=1}^m \gamma_\ell [D\gamma]_\ell\right|.\label{eq:convergencereplicator7}
    \end{align} 
    From Sec.\ref{sec:lotkavolterraEGT} we know that the existence of a unique solution is equivalent to the existence of the solution of the linear system of equations defined by $D$ with non-zero RHS. Hence, since we have assumed $\rank D=n$, there exist a unique solution $y$ to  the right-hand side of Eq.~\eqref{eq:convergencereplicator7}. Note that the linear solution may lie outside of the simplex $y\notin S_m$. However, for our purposes convergence to any unique rest point $\hat{y}$ is sufficient. We obtain the result that the replicator equation \eqref{eq:convergencereplicator1} converges to one or more rest points depending on the initial $\gamma(0)$ which completes the proof.
\end{proof}

\subsection{Persistence and Extinction}
\label{sec:peristence}
Note that the persistence of the DomBal combination trivially follows from Theorem~\ref{th:dombalconvergence}. Let us thus turn our attention to the other combination of evolutionary strategies presented in this paper.
\begin{theorem}
    For an AltSel system \eqref{eq:altselcontsmooth} where $\rank D = m $ with initial values 
    \begin{align}
    0 < \gamma_j(0) < 1 \text{ for all } 1\le j\le m.
    \end{align}
    Then, there exists a unique rest point $\hat{\gamma}$ and no gene becomes extinct, i.e. $\gamma$ stays in the interior of $S_m$ with 
    \begin{align}
    0 < \gamma_j(t) < 1 \text{ for all } 1\le j\le m
    \end{align}
    for $t\rightarrow \infty$.
\end{theorem}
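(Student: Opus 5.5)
The proof splits into two claims: that no gene reaches the boundary (persistence), and that the interior rest point is unique. Throughout I work with the continuous system \eqref{eq:altselcontsmooth}, whose right-hand side I write as $\dot\gamma_j=\gamma_j\bigl(\gamma_j[D\gamma]_j-\sum_\ell\gamma_\ell^2[D\gamma]_\ell\bigr)$.

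\textbf{Forward invariance of the interior.} The factor $\gamma_j$ in front of the bracket makes each face $\{\gamma_j=0\}$ invariant. So if $0<\gamma_j(0)$, then $\gamma_j(t)>0$ for every finite $t$. Normalization \eqref{eq:normalization1} together with $m\ge 2$ positive coordinates then gives $\gamma_j(t)<1$ for all finite $t$.

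\textbf{Ruling out extinction as $t\to\infty$.} The hard part is excluding $\gamma_j(t)\to 0$ or $\gamma_j(t)\to 1$ in the limit.
\begin{itemize}
\item The case $\gamma_j\to 1$ is already excluded by Step~1 of the proof of Theorem~\ref{th:altselconvergence}.
\item For $\gamma_j\to 0$, I would use the quotient rule \eqref{eq:quotient rule} applied to $\gamma_j/\gamma_\ell$. Near the face $\gamma_j=0$ the growth rate of gene $j$ is $\gamma_j[D\gamma]_j\to 0$.
\item The average term $\sum_\ell\gamma_\ell^2[D\gamma]_\ell$ needs a sign. I would mimic Step~1: on the boundary face the remaining genes play a reduced AltSel game. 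The selfish organism contribution there is nonpositive by the pair-sum argument \eqref{eq:convergencereplicator2}, since $\sum_{(i,t)}(\phi_{ij}-\phi_{tj})(r_i-r_t)$ weighted by $\gamma_j$ and summed over $j$ gives $\sum(r_i-r_t)^2\ge 0$. The altruistic contribution is antisymmetric in kinship-weighted differences and should average to zero over $\gamma$.
\item This would show the mean fitness $\bar f$ is strictly negative near the face, so $\dot\gamma_j/\gamma_j\approx-\bar f>0$. The face $\gamma_j=0$ is then repelling.
\item Formally, I would build the average Lyapunov function $P(\gamma)=\prod_j\gamma_j$ (Hofbauer--Sigmund persistence criterion) and show $\dot P/P>0$ on a neighbourhood of $\partial S_m$.
\end{itemize}

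\textbf{Uniqueness.} At an interior rest point, Theorem~\ref{th:bishopcannings} gives $\gamma_j[D\gamma]_j=c$ for all $j$. Substituting $z_j=\gamma_j$ yields the system $\operatorname{diag}(\gamma)D\gamma=c\mathbf 1$ with $\sum\gamma_j=1$. Since $\rank D=m$, I would argue via the Lotka--Volterra mapping of Sec.~\ref{sec:lotkavolterraEGT}, as in Step~2 of Theorem~\ref{th:altselconvergence}, that the comparison linear system $D\gamma=c'\mathbf 1$ has a unique solution. I would then use a degree argument: the vector field points inward on $\partial S_m$ (shown above), so the index sum of interior rest points equals $1$. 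Combined with the $\rank D=m$ nondegeneracy, this should force exactly one rest point, provided each rest point has index $+1$. I would try to establish that by showing the Jacobian restricted to the tangent space of $S_m$ has negative determinant sign pattern from the mutually repulsive structure of Alt and Sel.

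\textbf{Main obstacle.} I expect the index-sign step in the uniqueness argument, and the sign of $\bar f$ near the boundary in the persistence argument, to be the main difficulties. I am not confident either holds for arbitrary $\Phi$; both may require extra structural assumptions on $D$, for example negative definiteness on the tangent space.
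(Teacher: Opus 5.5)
\textbf{Persistence.} Your forward-invariance step is fine. The step that carries the proof, the sign of the mean fitness near $\partial S_m$, fails as argued, because you dropped a factor $\gamma_j$. Since $\Delta_j^{sel}$ in \eqref{eq:convergencereplicator2} already carries the prefactor $\gamma_j$, weighting by $\gamma_j$ gives
\[
\sum_{j=1}^m\gamma_j\Delta_j^{sel}=-\frac{2}{\widetilde{\nu}^\omega n}\sum_{(i,t)}\kappa^\omega_{it}\,(s_i-s_t)(r_i-r_t),\qquad s_i=\sum_{j=1}^m\gamma_j^2\phi_{ij}.
\]
This is not a sum of squares, because $s$ and $r$ need not order the organisms alike: at $\gamma=(0.6,0.4)$ the rows $(0,1)$ and $(0.4,0.25)$ give $r_1>r_2$ but $s_1<s_2$. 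The altruistic part does not average out either. $\sum_j\gamma_j\tilde{\Delta}^{g:alt}_{ij}=0$ by antisymmetry, but the dominance factor $\gamma_j(\phi_{ij}-\frac12)$ in \eqref{eqn:gs-altruism} leaves $\sum_{j\neq\ell}\gamma_j^2D^g_{j\ell}\gamma_\ell$, whose weights $\gamma_j^2\gamma_\ell$ are not symmetric in $(j,\ell)$.

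What you actually need is a sign for the cubic form $\bar\Delta(\gamma)=\sum_\ell\gamma_\ell^2[D\gamma]_\ell$ along the boundary dynamics. At a boundary rest point with $k$ absent genes, $\dot P/P$ extends continuously to $-k\,\bar\Delta$, so your Lyapunov condition is exactly $\bar\Delta<0$ there. The hypothesis $\rank D=m$ cannot deliver this: for $m=2$ and $D=I$ the system reads $\dot\gamma_1=\gamma_1\gamma_2(2\gamma_1-1)$, so every interior orbit other than the rest point $\gamma_1=\frac12$ loses a gene. A proof has to use the AltSel sign structure of $D$. For instance, $D_{jj}=D^\omega_{jj}<0$, because the Alt sum omits $\ell=j$ and $D^\omega_{jj}$ is minus a kinship-weighted sum of $(\phi_{ij}-\phi_{tj})^2$. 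This makes every vertex a repeller, so you do not even need Step~1 of Theorem~\ref{th:altselconvergence}. On the faces, however, nothing in your proposal supplies the sign.

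\textbf{Uniqueness.} The index argument is only announced. By your own first step every face is invariant, so the field is tangent to $\partial S_m$, not inward-pointing. The index count you want comes from Hofbauer's index theorem for permanent systems, which presupposes the persistence just discussed. Even granted that, uniqueness needs every interior rest point to be nondegenerate with the same index, i.e.\ a sign of the Jacobian determinant on the tangent space at every solution of $\gamma_j[D\gamma]_j=c$. That is the entire content of the claim.

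Invertibility of $D$ does not help. It makes $D\gamma=c'\mathbf 1$ uniquely solvable, but the rest-point system $\operatorname{diag}(\gamma)D\gamma=c\mathbf 1$, $\sum_j\gamma_j=1$, is quadratic. For $m=2$ with the invertible $D=\bigl(\begin{smallmatrix}1&-2\\2&-1\end{smallmatrix}\bigr)$ it has the two interior solutions $\gamma_1=\frac12\pm\frac{\sqrt3}{6}$. Also, ``unique'' can only mean unique in $\interior S_m$, since every vertex is a rest point.

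The paper proceeds differently. It takes the limit from Theorem~\ref{th:altselconvergence}, writes $\hat\gamma_j[D\hat\gamma]_j=c$ for all $j$ by Theorem~\ref{th:bishopcannings}, and argues that $\hat\gamma_j=0$ would make $c/\hat\gamma_j$ undefined, without a separate uniqueness argument. That route does not fill your gaps either. The numerator of \eqref{eqn:replicator2} vanishes at $\gamma_j=0$ through its prefactor alone, so the Bishop--Cannings equalities constrain only the genes in the support of a rest point. You identified the two real obstacles correctly, but the proposal does not overcome them.
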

\begin{proof}
It follows from Theorem~\ref{th:altselconvergence} that Eq.\eqref{eq:altselcontsmooth} converges to some rest point $\hat{\gamma}\in S_m$. To prove this theorem we thus only have to show that $\hat{\gamma}\in \interior S_m$ and that $\hat{\gamma}$ is unique. It follows from the Bishop-Cannings theorem Eq.~\ref{eq:bishopcanningstheorem} that 
    \begin{align}
    \hat{\gamma}_j [D \hat{\gamma}]_j = c \text{ for all } 1\le j\le m.
    \end{align}
    Here, $c$ could be zero, but only if $[D \hat{\gamma}]_j=0$ for all $j$. Hence we have
    \begin{align}
    [D \hat{\gamma}]_j = \frac{c}{\hat{\gamma}_j} \text{ for all } 1\le j\le m. 
    \end{align}
    The dynamics $\hat{\gamma}_j=\gamma_j\rightarrow 0$ would mean that the right-hand side would become undefined which contradicts the Bishop-Cannings theorem. Consequently, all $\hat{\gamma}_j>0$ which completes the proof.
\end{proof}
%



\section{Applications and Examples}
\label{sec:applications and examples}
In its current form, Evolutionary Data Theory can be applied to any problem that can be represented in matrix form. In the following we will provide a small example where a supermarket chain obtains a delivery of bananas and investigate certain questions that may arise in this situation.

Our, entirely virtual, chain of supermarkets has 10 stores at various distances from the logistic center where the banana delivery is currently located. We will take into account the following properties of the stores
\begin{description}
    \item[distance~(km)] the distance that a supplier has to drive to reach the store.
    \item[store space~(m2)] the total selling space the store has.
    \item[storage space left (m2)] the current free storage space. 
    \item[monthly revenue~(MEuro)] the total monthly revenue of a store in million Euros.
    \item[bananas sold~(kg)] the amount of bananas sold in the last month.
    \item[A/C grade~(1-3)] the quality of the A/C in the storage facilities of the store, graded from 1-3 where larger means better.
    \item[flagship store (1/0)] whether the store is a flagship store of the chain.    
\end{description}
If the example was real, some properties would update on a regular basis~(storage space left, monthly revenue, bananas sold, A/C grade), whereas others remain static~(distance, flagship). In the following we will assume that we take a snapshot of the day the banana delivery arrives at the logistic center. These data $X$ are shown in Tab.\ref{tab:stores}
\begin{table}[tbh]
		\centering
		\caption{The example data with 10 stores. See text for details on the data features.}
		\label{tab:stores}
		\begin{tabular}{lccccccc}
			\toprule
			     & distance &space & st. left & revenue & sold & A/C  &flagship \\
			\midrule
			store A&20& 400& 80& 2& 100& 1& 0\\
            store B&20& 250& 110& 1.2& 90& 2& 0\\
            store C &20& 300& 70& 2.3& 115& 2& 0\\
            store D  &40& 250&	130& 1&	50&	1& 1\\
            store E&50& 400&	30& 1.4& 80& 3& 1\\
            store F&70& 300& 100& 2.4& 90& 2& 0\\
            store G&100& 250& 120& 1.3& 100& 2& 0\\
            store H&150& 350& 70& 2.7&	110& 1&	1\\
            store I&200& 650& 40& 0.8&	140& 2&	1\\
            store J&200& 500& 60& 2.3&	120& 3&	0\\
			\bottomrule
		\end{tabular}
	\end{table}
In our example $X$ we have $m=7$ genes~(properties) and $n=10$ organisms~(stores). To normalize $X$ we now need to define the local gene fitness functions $\phi_j$. Here, we will use two fitness functions
\begin{align}
    \phi_j^{direct}(x_{ij}) &= \frac{x_{ij}}{\max_{1\le i \le n} x_{ij}},\\
    \phi_j^{inverse}(x_{ij}) &= \frac{1-x_{ij}}{\max_{1\le i \le n} x_{ij}},
    \end{align}
where $\phi_j^{direct}$ covers properties where larger is better and $\phi_j^{inverse}$ will be applied to properties where smaller is better, respectively. In our case, only for 'distance', $j=1$, smaller is better and $\phi^{inverse}$ will be used and for all other properties $2\le j \le m$, $\phi^{direct}$ will normalize the property.

Applying the normalization to $X$ from Tab.\ref{tab:stores}, we arrive  with Eq.~\eqref{eqn:genevariantfitnessmatrix} at the following matrix
\begin{align}
\Phi(X) = 
\begin{pmatrix} 
 0.90& 0.62& 0.62& 0.74& 0.71& 0.33& 0.00\\
 0.90& 0.38& 0.85& 0.44& 0.64& 0.67& 0.00\\
 0.90& 0.46& 0.54& 0.85& 0.82& 0.67& 0.00\\
 0.80& 0.38& 1.00& 0.37& 0.36& 0.33& 1.00\\
 0.75& 0.62& 0.23& 0.52& 0.57& 1.00& 1.00\\
 0.65& 0.46& 0.77& 0.89& 0.64& 0.67& 0.00\\
 0.50& 0.38& 0.92& 0.48& 0.71& 0.67& 0.00\\
 0.25& 0.54& 0.54& 1.00& 0.79& 0.33& 1.00\\
 0.00& 1.00& 0.31& 0.30& 1.00& 0.67& 1.00\\
 0.00& 0.77& 0.46& 0.85& 0.86& 1.00& 0.00\\
\end{pmatrix},
\label{eq:PhiExample}
\end{align}
with the mean values
\begin{align}
    \widetilde{\phi} = [0.57,\; 0.56,\; 0.62,\; 0.64,\; 0.71,\; 0.63,\; 0.40].\label{eq:storemeans}
    \end{align}
In the following sections, we will investigate several problems, comparing the results of the combination DomBal$=DB$ with the strategies AltSel$=AS$.

\subsection{Optimization Problems}
\label{sec:optimization}
Since in EDT we aim to determine the fitness of genes and organisms, discrete multi-objective optimization problems are one of the most straightforward ways to apply the method. Note that discreteness in this sense means that the solutions space is not continuous, but there are a fixed set of solutions $X$ and the objective is to find the optimal one.

Hence, for our example Tab.\ref{tab:stores}, we first ask: what is the most relevant data feature~(gene) to prioritize stores in terms of the banana delivery? In Fig.\ref{fig:genefitness} we compare the iterative gene fitness values for the two combinations of evolutionary strategies discussed in this paper. Note that in all numerical experiments, we start the algorithm with a harmonic gene fitness~$\gamma^{(0)}=1/m$ for all $1\le j\le m$.
\begin{figure}[tb]
		\centering
		\includegraphics[width=0.5\textwidth]{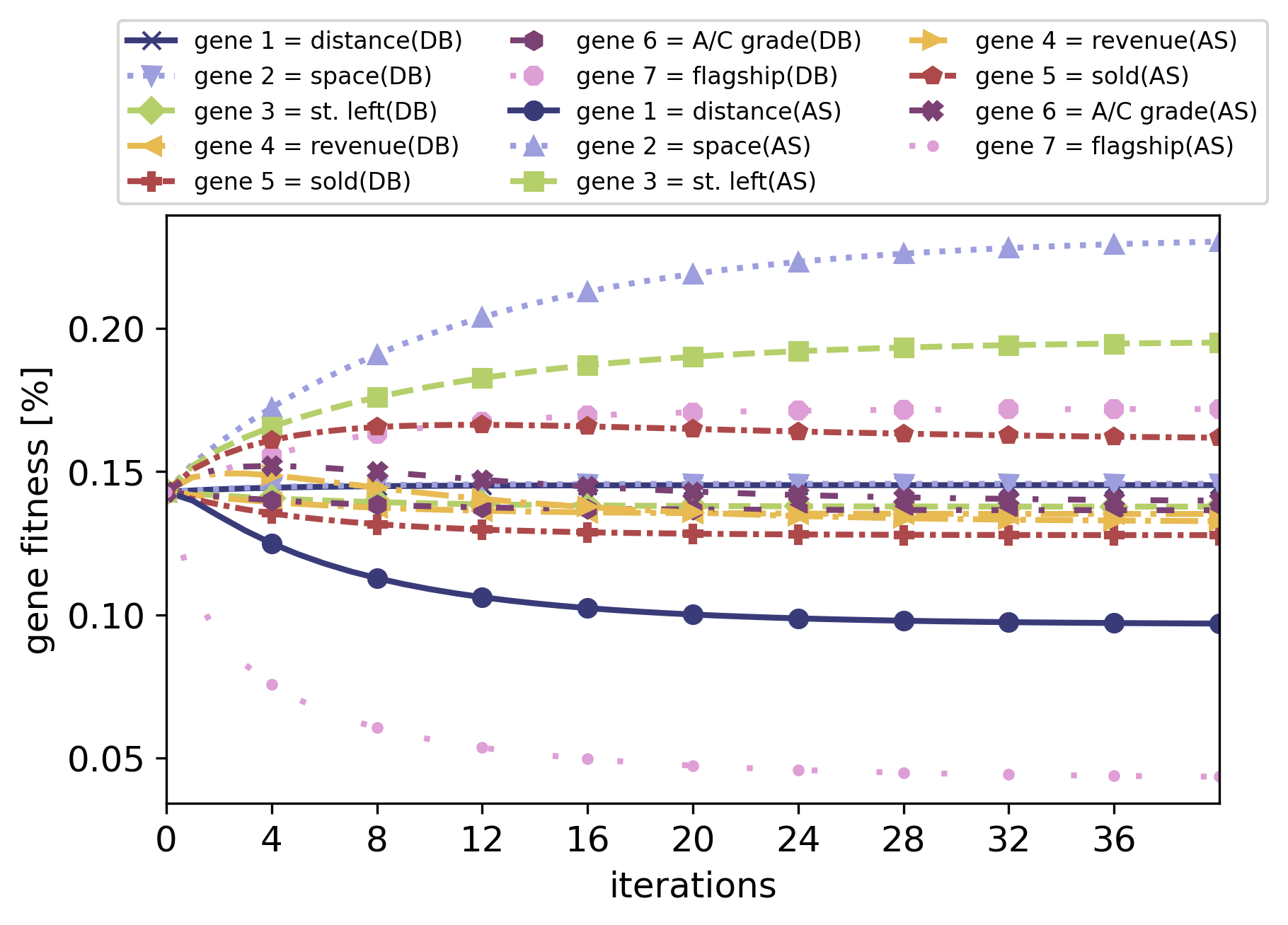}
		\caption{The gene fitness values for Tab.~\ref{tab:stores} over $40$ iteration comparing $DomBal=DB$ with $AltSel=AS$ evolutionary strategies~(we take harmonic initial values $\gamma^{(0)}_j=1/m=1/7$ for all genes $1\le j\le 7$).
		}
		\label{fig:genefitness}
	\end{figure}
    
It has been shown by Daniilidis \emph{et al}~\cite{daniilidis2025} that the unique rest point with the strategies DomBal depends only on the column averages Eq.\eqref{eq:storemeans}. In particular, the gene 'flagship' with the lowest average $\widetilde{\phi}_7=0.4$ is deemed the fittest. Hence, the rest point
\begin{align}
    \hat{\gamma} = [0.15,\, 0.15,\, 0.14,\, 0.14,\, 0.13,\, 0.14,\, 0.17]\label{eq:dbrestpoint}
    \end{align}
identifies the most relevant as $\hat{\gamma}_7=0.17$. However, since the mean values $\widetilde{\phi}$ are relatively close together, also the converged gene fitness~Eq.~\eqref{eq:dbrestpoint} do not differ much from the harmonic value. This shows a weakness of the simple combination DomBal that only depends on the first moments of the columns of $\Phi$.

For the AltSel combination of evolutionary strategies the result is quite different. Assembling the Alt part with~Eq\eqref{eq:asDg} we obtain
\begin{align}
D^{g} = 
\begin{pmatrix} 
 0.00& -1.24& -0.64& -0.91& -1.10& -0.15& -0.12\\
 -0.26& -0.81& 0.00& -0.52& -0.57& -0.68& -0.92\\
 -0.57& -0.59& -0.54& 0.00& -0.31& -0.49& -0.93\\
 -0.72& -0.31& -0.55& -0.27& 0.00& -0.29& -0.77\\
 -0.65& -0.42& -0.66& -0.45& -0.29& 0.00& -0.76\\
 -2.13& -1.72& -2.17& -2.16& -2.05& -2.05& 0.00
\end{pmatrix},
\label{eq:storesDg}
\end{align}
where we see that the in terms of dominance weakest gene 'flagship' $j=7$ altruistically shares fitness with all others~(the last row in Eq.\eqref{eq:storesDg}). Determining the Sel part with~Eq\eqref{eq:asDw}
\begin{align}
D^{\omega} = 
\begin{pmatrix} 
  -3.81& 1.46& -1.09& 0.19& 1.18& 0.69& 1.45\\
 1.46& -1.14& 1.15& 0.15& -0.69& -0.43& -0.91\\
 -1.09& 1.15& -1.94& 0.30& 0.75& 0.90& 1.32\\
 0.19& 0.15& 0.30& -1.85& -0.30& 0.11& 1.23\\
 1.18& -0.69& 0.75& -0.30& -0.89& -0.30& 0.43\\
 0.69& -0.43& 0.90& 0.11& -0.30& -1.79& 0.64\\
 1.45& -0.91& 1.32& 1.23& 0.43& 0.64& -7.69
\end{pmatrix},
\label{eq:storesDw}
\end{align}
showing the largest off diagonal fitness transfer at $D^{\omega}_{12}=D^{\omega}_{21}=1.46$ and $D^{\omega}_{17}=D^{\omega}_{71}=1.45$. Adding both parts together leads to the overall
\begin{align}
D = 
\begin{pmatrix} 
 -3.81& 0.22& -1.72& -0.72& 0.07& -0.34& 0.22\\
 0.84& -1.14& 0.62& -0.14& -0.73& -0.58& -1.04\\
 -1.35& 0.34& -1.94& -0.22& 0.18& 0.23& 0.41\\
 -0.37& -0.44& -0.24& -1.85& -0.61& -0.37& 0.30\\
 0.45& -1.01& 0.20& -0.57& -0.89& -0.58& -0.33\\
 0.04& -0.85& 0.25& -0.34& -0.58& -1.79& -0.12\\
 -0.68& -2.64& -0.84& -0.93& -1.62& -1.41& -7.69
\end{pmatrix},
\label{eq:storesD}
\end{align}
which indicates that the most successful gene is 'store space' having the most positive balance of fitness transfers between the genes, whereas 'flagship' has the worst\footnote{The diagonal entries of $D$ have no effect on the game dynamics, since only fitness transfers \emph{between} genes are relevant.}.

Consequently, the rest point of AltSel$=AS$
\begin{align}
    \hat{\gamma} = [0.09,\, 0.21,\, 0.19,\, 0.13,\, 0.16,\, 0.14,\, 0.07]\label{eq:asrestpoint}
    \end{align}
show that 'store space' $j=2$ is the most relevant data feature, while 'flagship' $j=7$ is by far the least important~(note that the gene fitness values in~Fig~\ref{fig:genefitness} are not fully converged after $40$ iterations).

    \begin{figure}[tb]
		\centering
		\includegraphics[width=0.5\textwidth]{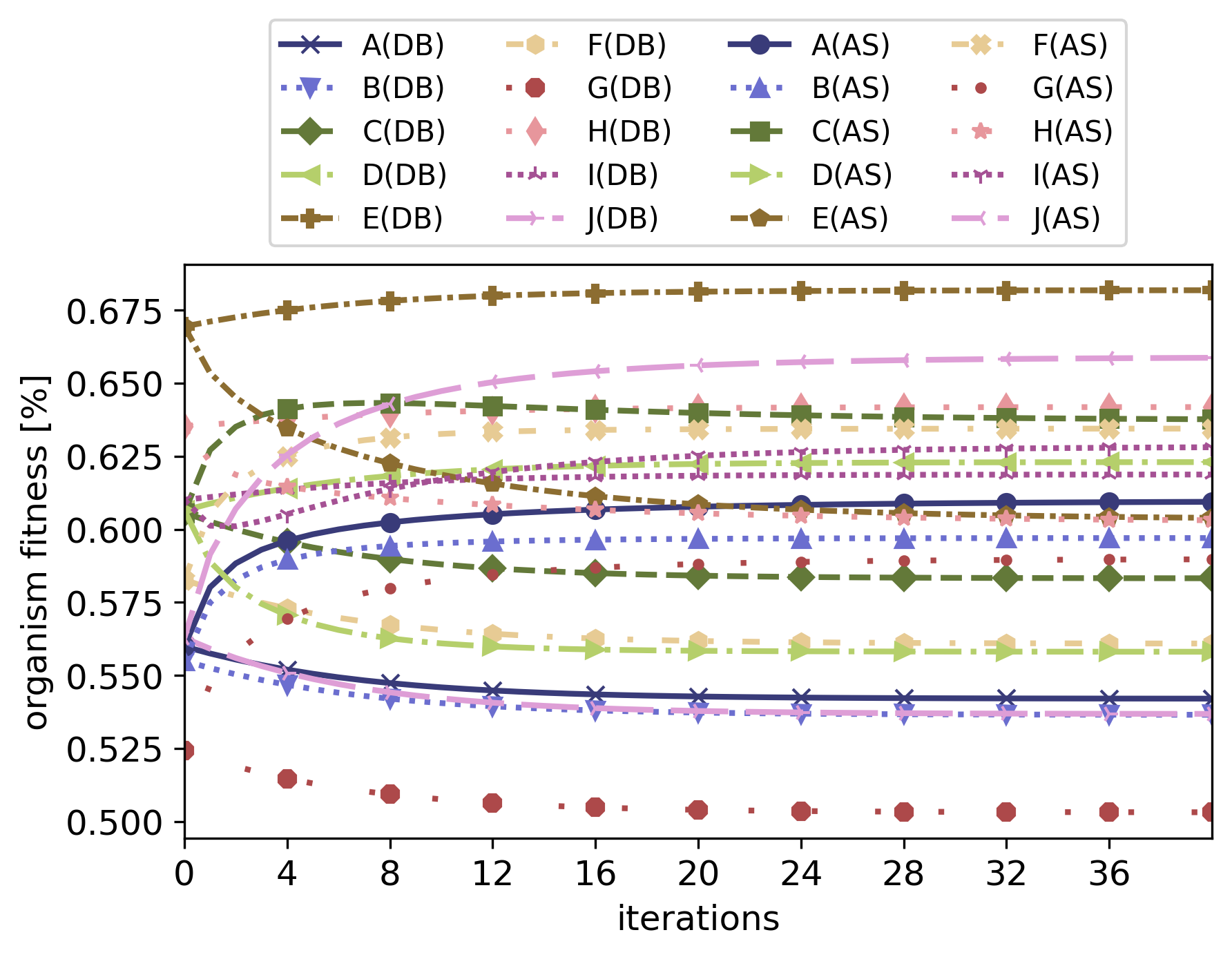}
		\caption{The organism fitness values for Tab.~\ref{tab:stores} over $40$ iteration comparing $DomBal=DB$ with $AltSel=AS$ evolutionary strategies. The organism/store with the highest fitness is thus the fittest player in the evolutionary game.
		}
		\label{fig:orgfitness}
	\end{figure}
As a next question we ask: which is the optimal store to supply first? In the evolutionary picture, this should clearly be the fittest organism~(store). In Fig.~\ref{fig:orgfitness} we thus show the iterative organism values comparing DomBal$=DS$ with AltSel$=AS$.

For DomBal, since the gene fitness values do only differ slightly from the harmonic case, the fittest store is E which has larger than average values for almost all properties and is also a flagship store.

For AltSel, the answer would be store J instead, which is relatively large with a good A/C. We observe that taking evolutionary strategies depending on the second moments leads to a richer dynamics and difference to the harmonic case when the first moments are so similar to each other.

\subsection{Distribution Problems}
\label{sec:distributionproblems}
Instead of asking ourselves which store to supply first, we may ask the more general question on how to quantitatively distribute the delivery. Here, it pays that EDT is more than a multi-objective sorting method, but returns fitness scores which can be used to plan the exact distribution.

Hence, taking the organism fitness values shown in~Fig.~\ref{fig:orgfitness}, we visualize in Fig.~\ref{fig:distribution} by how much we should deviate from the mean distribution rate a store normally gets of a delivery.
\begin{figure}[tb]
		\centering
		\includegraphics[width=0.5\textwidth]{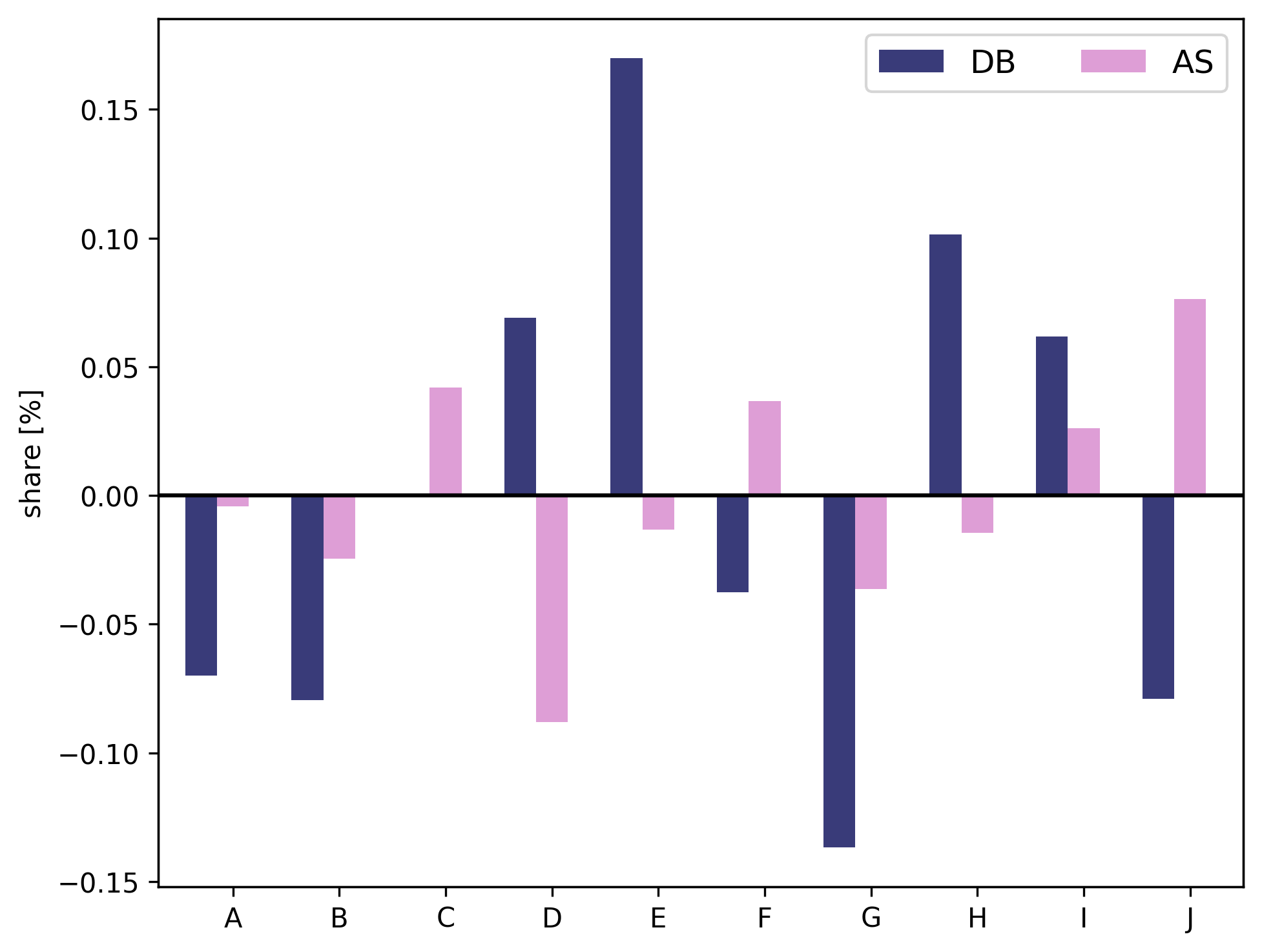}
		\caption{Relative deviation from mean distribution for $DomBal=DS$ and $AltSel=AS$ combinations of evolutionary strategies. The stores A-J are shown along the $x$-axis while the $y$-axis represents how much more or less should be distributed.
		}
		\label{fig:distribution}
	\end{figure}
For DomBal, we observe that the stores C, F, I and J should get more bananas wrt. to the average. Note that all of these stores have at least a grade $2$ A/C - therefore it pays to supply them first at a higher rate to reduce the storage time at the stores for the others.

For AltSel, we see that the stores D, E, H and I should get priority on the delivery. Here, the reasons for prioritization are more diverse: store D has the largest storage space left; store E is large and is a flagship store; store H has a high revenue; store I is selling a lot of bananas. In any case, EDT provides a straight-forward way to quantitatively approach distribution problems.

\subsection{Mixed Evolutionary Strategies }
\label{sec:mixedES}
So far, we choose either DomBal or AltSel as pure combinations of evolutionary strategies. We will now shortly introduce mixed strategies. To that end, we define the mixed gene delta function as
\begin{align}
\Delta_j^{g:mix}&=\alpha^{g:dom}\Delta_j^{g:dom}+\alpha^{g:alt}\Delta_j^{g:alt}\label{eq:mixedgenedelta}
\end{align}
and the mixed organism delta function as
\begin{align}
\Delta_j^{\omega:mix}=\alpha^{\omega:bal}\Delta_j^{\omega:bal}+\alpha^{\omega:sel}\Delta_j^{\omega:sel},\label{eq:mixedorgdelta}
\end{align}
where we consider the normalization $\sum_s \alpha^{g:s}=\sum_s \alpha^{\omega:s}=1$. 

As we have seen very diverse results with different ES, the selection of weights $\alpha$ represent a modeling choice for that particular system. In general, there are three ways to do that: (i) fixed choice depending on knowledge about the system or problem (ii) self-consistent determination and (iii) a machine learning approach. Method (ii) has been investigated heuristically in Wissgott~\cite{wissgott2025}. Let us turn our attention to determine the mixing through training.

\subsection{Machine Learning Applications}
\label{sec:machinelearning}
Assume we have the training data $\mathcal{X}=\{X_1,\ldots,X_q\}$ and target organism fitness scores $\mathcal{R}=\{R_1,\ldots, R_q\}$ available, where $R_\ell=\{r_1,\ldots, r_{n_\ell}\}$. Then, we can determine the optimal choice of $\alpha^g$ and $\alpha^\omega$ in Eq.~(\ref{eq:mixedgenedelta}+\ref{eq:mixedorgdelta}) by standard machine learning training minimizing the deviation of fitness values.

There are several ways to add additional degrees of freedom. Here, we introduce just one and leave the next steps to future work: instead of having one set of $\{\alpha^{g:dom}, \alpha^{g:alt}; \alpha^{\omega:bal}, \alpha^{\omega:sel}\}$ for the \emph{full} system, we might generalize that every gene $j$ obtains an \emph{individual} set of mixing weights $\{\alpha_j^{g:dom}, \alpha_j^{g:alt};\alpha_j^{\omega:bal}, \alpha_j^{\omega:sel}\}$. Note that convergence and persistence in this case is not yet proven, hence, these setups might experience a certain level of instability.



\section{Conclusion}
\label{sec:conclusion}
We have seen that Evolutionary Data Theory proves a versatile tool analyzing and solving general data problems. In particular, since we can be sure that EDT converges to a unique rest point with all data feature still present, our approach can be used in a wide range of applications.

The universality of the model of EDT becomes clear if we consider the opposite: assume we had to apply different rules and preconditioning for each particular set of input data. Such a method for data analysis would be instable and hard to maintain. In contrast, the proven theorems of convergence and persistence ensure that EDT can be applied out-of-the-box and return information about the relevance of data features, depending on the chosen set of evolutionary strategies.

While this paper shall introduce EDT to a wider audience, the theory has hardly come out of the cradle. With the convergence of self-consistent variants still to be proven and the derivation of completely new evolutionary strategies open, we just stand at the beginning of a rich, new field of mathematical research.



\bibliography{geneticai_v1}

\onecolumngrid

\end{document}